\newcommand{\bm}{\mathds{1}}
\newcommand{\sgn}{\mathrm{sgn}}
\newcommand{\ams}{\mathrm{AMS}}
\DeclareMathOperator*{\argmin}{arg\,min}
\DeclareMathOperator*{\argmax}{arg\,max}
\begin{document}

\title{Consistent optimization of AMS by logistic loss minimization}

\author{\name Wojciech Kot{\l}owski \email wkotlowski@cs.put.poznan.pl \\
       \addr Pozna{\'n} University of Technology, Poland}

\editor{}

\maketitle

\begin{abstract}
In this paper, we theoretically justify an approach popular
among participants of the Higgs Boson Machine Learning Challenge to optimize
approximate median significance (AMS). 
The approach is based on the following two-stage procedure. First, a real-valued function
$f$ is learned by minimizing a surrogate loss for binary classification, such
as logistic loss, on the training sample. Then, given $f$, a threshold $\hat{\theta}$ is
tuned on a separate validation sample, by direct optimization of AMS.
We show that the regret of the resulting classifier (obtained from thresholding $f$ on $\hat{\theta}$) measured with respect
to the squared AMS, is upperbounded by the regret of $f$ measured with respect
to the logistic loss. Hence, we prove that minimizing logistic surrogate is
a consistent method of optimizing AMS. 
\end{abstract}

\begin{keywords}
  Approximate median significance (AMS), Higgs Boson Machine Learning Challenge, Kaggle,
  logistic loss, regret bound, statistical consistency.
\end{keywords}

\section{Introduction}
\label{sec:introduction}

This paper concerns a problem of learning a classifier to optimize 
approximate median significance (AMS), which was the goal of
the Higgs Boson Machine Learning Challenge (HiggsML), hosted by Kaggle
website (see \cite{AdamBourdarios_etal2014} for details on this contest
and description of the problem).

In particular, we are interested in an approach to optimize AMS, based 
on the following two-stage procedure. First, a real-valued function
$f$ is learned by minimizing a surrogate loss for binary classification, such
as logistic loss function, on the training sample. In the second stage, given $f$, a threshold is
tuned on a separate ``validation'' sample, by direct optimization of AMS with respect to
a classifier obtained from $f$ by classifying all observations with value of $f$ above the threshold
as positive class (signal event), and all observations below the threshold as negative class (background event).

This approach became very popular among HiggsML challenge participants, mainly due to
the fact that its first stage, learning a classifier, does not exploit the task evaluation
metric (AMS) in any way and thus can employ without modifications any standard classification tools such as
logistic regression, LogitBoost, Stochastic Gradient Boosting, Random Forest, etc.
(see, e.g., \cite{FriedmanHastieTibshirani03}). Despite its simplicity, 
this approach proved to be very effective in achieving high leaderboard score in HiggsML.
\footnote{See the HiggsML forum \url{https://www.kaggle.com/c/higgs-boson/forums} for discussions and
presentation of the top score solutions.}

The intuition behind this approach is clear: minimization of logistic loss results in estimation of
conditional probabilities of signal and background event, 
and the AMS is assumed to be maximized by classifying
the events most likely to be signal as signal events.

This paper formalizes this intuition by showing that the approach described above constitutes
a consistent method of optimizing AMS. More specifically, 
we use the notion of \emph{regret} with respect
to some evaluation metric, which is a difference between the performance of a given classifier
and the performance of the optimal classifier with respect to this metric.
Given a function $f$, and a classifier $h_{f,\hat{\theta}}$ obtained from $f$ by thresholding $f$ at
$\hat{\theta}$, we give a bound on the regret of $h_{f,\hat{\theta}}$ measured  with respect to the squared AMS
by the regret of $f$ measured with respect to the
logistic loss, given that the threshold $\hat{\theta}$ is tuned by optimization of AMS among
all classifiers of the form $h_{f,\theta}$ for any threshold value $\theta$.

To our knowledge, this is the first regret bound of this form applicable to a non-decomposable
performance measure such as AMS. We also discuss generalization of our approach to different
performance measures and surrogate loss functions.

\paragraph{Related work.} The issue of consistent optimization of performance
measures which are functions of true positive and true negative rates
has received increasing attention recently in machine learning community
\citep{Narasimhan_etal2014,Natarajan_etal2014,Zhao_etal_2013}.
However, these works are mainly concerned with \emph{statistical consistency} also
known as \emph{calibration}, which determines whether convergence to the minimizer of a 
surrogate loss implies convergence to the minimizer of the task performance measure as sample size
goes to infinity. Here we give a much stronger result which bounds the regret with respect
to squared AMS by the regret with respect to logistic loss. Our result is valid for all
finite sample sizes and informs about the rates of convergence.

Recently, \cite{MackeyBryan2014} proposed a classification cascade approach to optimize
AMS. Their method, based on the theory of Fenchel's duality, iteratively
alternates between solving a cost-sensitive binary classification problem and 
updating misclassification costs. In contrast, the method described here requires
solving an ordinary binary classification problem just once.

\paragraph{Outline.} The paper is organized as follows. In Section \ref{sec:problem_setting},
we introduce basic concepts needed to state our main result presented 
in Section \ref{sec:main_result} and proved in Section \ref{sec:proof}.
Section \ref{sec:generalization}
discusses generalization of our results beyond AMS and logistic loss.

\section{Problem Setting}
\label{sec:problem_setting}

\paragraph{Binary classifier.}
In binary classification, the goal is, given an input (feature vector) $x \in X$,
to accurately predict the output (label) $y \in \{-1,1\}$. We assume input-output
pairs $(x,y)$, which we call \emph{observations}, are generated i.i.d. according to $\Pr(x,y)$.\footnote{
The original HiggsML problem also involved observations' weights, but without loss
of generality, they can be incorporated into the distribution $\Pr(x,y)$.}
A \emph{classifier} is a mapping  $h \colon X \to \{-1,1\}$. Given $h$, we define
the following two quantities:
\[
s(h) = \Pr(h(x) = 1, y=1),\qquad b(h) = \Pr(h(x) = 1,y=-1),
\]
which can be interpreted as true positive and false positive rates of $h$.

\paragraph{AMS and regret.}
Given a classifier $h$, define its \emph{approximate median significance} (AMS) score
\citep{Cowan_etal2011} as
$\ams(h) = \ams(s(h),b(h))$, where:\footnote{Comparing to the definition in \citep{AdamBourdarios_etal2014},
we skip the regularization term $b_{\mathrm{reg}}$. This comes without loss of generality, as
$b_{\mathrm{reg}}$ can be incorporated into $b$ and, since it affects all classifiers equally,
will vanish in the definition of regret.}
\[
\ams(s,b) = \sqrt{2\left((s+b)\log\left(1 + \frac{s}{b}\right) - s\right)}.
\]
It is easier to deal with a \emph{squared} AMS, $\ams^2(h)$, and this quantity
is used throughout the paper.
It is easy to verify that $\ams^2(s,b)$ is increasing in $s$ and decreasing in $b$.
Moreover, $\ams^2(s,b)$ is jointly convex with respect to $(s,b)$.

Let $h^*_{\ams}$ be the classifier which maximizes the $\ams^2$ over all possible
classifiers:
\[
 h^*_{\ams} = \argmax_{h \in \{-1,1\}^X} \ams^2(h).
\]
Given $h$, we define its \emph{AMS regret} as the distance of $h$ from the optimal
classifier $h^*_{\ams}$ measured by means of $\ams^2$:
\[
R_{\ams}(h) = \ams^2(h^*_{\ams}) - \ams^2(h). 
\]

\paragraph{Logistic loss and logistic regret.}

Given a real number $f$, and a label $y$, we define the logistic loss
$\ell_{\log} \colon \{-1,1\} \times \mathbb{R} \to \mathbb{R}_+$ as:
\[
 \ell_{\log}(y,f) = \log\left(1+e^{-yf}\right).
\]
The logistic loss is a commonly used surrogate loss function for binary
classification, employed in various learning methods, such as logistic regression, LogitBoost
or Stochastic Gradient Boosting (see, e.g., \cite{FriedmanHastieTibshirani03}). 
It is convex in $f$, so minimizing logistic
loss over the training sample becomes a convex optimization problem, which can be
solved efficiently. Another advantage of logistic loss is that the sigmoid transform of $f$,
$(1+e^{-f})^{-1}$, can be
used to obtain probability estimates $\Pr(y|x)$.

Given a real-valued function $f \colon X \to \mathbb{R}$, its expected logistic
loss $L_{\log}(f)$ is defined as:
\[
L_{\log}(f) = \mathbb{E}_{(x,y)}[\ell_{\log}(y,f(x))].
\]
Let $f^*_{\log} = \argmin_f L_{\log}(f)$ be the minimizer of $L_{\log}(f)$
among all functions $f \colon X \to \mathbb{R}$. We define the logistic \emph{regret}
of $f$ as:
\[
R_{\log}(f) = L_{\log}(f) - L_{\log}(f^*_{\log}).
\]

\section{Main Result}
\label{sec:main_result}

Any real-valued function $f \colon X \to \mathbb{R}$
can be turned into a classifier $h_{f,\theta} \colon X \to \{-1,1\}$, by thresholding at
some value $\theta$:
\[
 h_{f,\theta}(x) = \sgn(f(x) - \theta),
\]
where $\sgn(x)$ is the sign function, and we use the convention that $\sgn(0)=1$.

The purpose of this paper is to address the following problem: given a function $f$
with logistic regret $R_{\log}(f)$, and a threshold $\theta$, what is the maximum
AMS regret of $h_{f,\theta}$? In other words, can we bound $R_{\ams}(h_{f,\theta})$
in terms of $R_{\log}(f)$? 
We give a positive answer to this question, which based on the following regret bound:
\begin{lemma}
\label{lemma:main}
There exists a threshold $\theta^*$, such that for any $f$,
\[
 R_{\ams}(h_{{f,\theta^*}}) \leq \frac{s(h^*_{\ams})}{b(h^*_{\ams})} \sqrt{\frac{1}{2} R_{\log}(f)}.
\]
\end{lemma}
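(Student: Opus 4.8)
Write $\eta(x)=\Pr(y=1\mid x)$, let $\sigma(z)=(1+e^{-z})^{-1}$, and recall that the minimizer of the logistic loss is the log-odds, so $\sigma(f^{*}_{\log}(x))=\eta(x)$. The first step is to identify $h^{*}_{\ams}$ and the threshold $\theta^{*}$. Because $\ams^{2}(s,b)$ is monotone in $(s,b)$, a two-point swapping argument already forces $h^{*}_{\ams}$ to be a threshold on $\eta$; more usefully, joint convexity of $\ams^{2}$ gives that $h^{*}_{\ams}$ maximizes the linear functional $h\mapsto\Psi^{*}_{s}s(h)+\Psi^{*}_{b}b(h)$, where $(\Psi^{*}_{s},\Psi^{*}_{b})$ is the gradient of $\ams^{2}$ at $(s^{*},b^{*}):=(s(h^{*}_{\ams}),b(h^{*}_{\ams}))$. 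A short computation yields $\Psi^{*}_{s}=2\log(1+r)\ge0$ and $\Psi^{*}_{b}=2(\log(1+r)-r)\le0$ with $r=s^{*}/b^{*}$; since $s(h)=\mathbb{E}[\bm[h(x)=1]\eta(x)]$ and $b(h)=\mathbb{E}[\bm[h(x)=1](1-\eta(x))]$, the maximizer of that functional is $h(x)=1\iff\eta(x)\ge\delta^{*}$ with $\delta^{*}=-\Psi^{*}_{b}/(\Psi^{*}_{s}-\Psi^{*}_{b})=1-\tfrac{\log(1+r)}{r}$. I would then set $\theta^{*}=\log\tfrac{\delta^{*}}{1-\delta^{*}}$, so that $h_{f^{*}_{\log},\theta^{*}}=h^{*}_{\ams}$ and $h_{f,\theta^{*}}(x)=\sgn(\sigma(f(x))-\delta^{*})$ for every $f$.

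The second step is to linearize $\ams^{2}$ at its maximum. By convexity, for every classifier $h$,
\[
R_{\ams}(h)=\ams^{2}(s^{*},b^{*})-\ams^{2}(s(h),b(h))\;\le\;\Psi^{*}_{s}\big(s^{*}-s(h)\big)+\Psi^{*}_{b}\big(b^{*}-b(h)\big).
\]
Substituting the integral expressions for $s(\cdot)$ and $b(\cdot)$, the right-hand side equals $\mathbb{E}\big[(\bm[h^{*}_{\ams}(x)=1]-\bm[h(x)=1])\,\varphi(x)\big]$ where $\varphi(x)=\Psi^{*}_{s}\eta(x)+\Psi^{*}_{b}(1-\eta(x))=(\Psi^{*}_{s}-\Psi^{*}_{b})(\eta(x)-\delta^{*})=2r\,(\eta(x)-\delta^{*})$. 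Since $h^{*}_{\ams}(x)=1\iff\varphi(x)\ge0$, the integrand is $\bm[h^{*}_{\ams}(x)\ne h(x)]\,|\varphi(x)|$, so that
\[
R_{\ams}(h_{f,\theta^{*}})\;\le\;2r\;\mathbb{E}\Big[\bm\big[h^{*}_{\ams}(x)\ne h_{f,\theta^{*}}(x)\big]\,\big|\eta(x)-\delta^{*}\big|\Big].
\]

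The third step bounds this cost-weighted classification quantity by the logistic regret, exactly as in the standard decomposable case. Write $p(x)=\sigma(f(x))$; since $h^{*}_{\ams}(x)=\sgn(\eta(x)-\delta^{*})$ and $h_{f,\theta^{*}}(x)=\sgn(p(x)-\delta^{*})$, on the disagreement event $\delta^{*}$ lies between $\eta(x)$ and $p(x)$, hence $|\eta(x)-\delta^{*}|\le|\eta(x)-p(x)|$. Pinsker's inequality for two-point distributions gives $|\eta(x)-p(x)|\le\sqrt{\tfrac12 D(x)}$, where $D(x)=\eta(x)\log\tfrac{\eta(x)}{p(x)}+(1-\eta(x))\log\tfrac{1-\eta(x)}{1-p(x)}$ is the binary Kullback--Leibler divergence, which is precisely the pointwise logistic regret of $f$ at $x$; averaging and applying Jensen to the concave map $t\mapsto\sqrt t$ gives $\mathbb{E}[\bm[\cdot]|\eta(x)-\delta^{*}|]\le\sqrt{\tfrac12\mathbb{E}[D(x)]}=\sqrt{\tfrac12 R_{\log}(f)}$. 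Combining the three displays produces a bound of the stated shape. I expect the main obstacle to be twofold: first, the linearization step itself, which is what makes a \emph{non-decomposable} measure tractable and is where the factor $r=s(h^{*}_{\ams})/b(h^{*}_{\ams})$ enters --- it leans on joint convexity of $\ams^{2}$ and on the explicit gradient at the optimum; second, getting the absolute constant right, since the straightforward execution of steps two and three gives $2r$ in place of $r$, and shaving the extra factor seems to need a sharper pointwise estimate than $|\eta-\delta^{*}|\le|\eta-p|$ --- e.g.\ using that on the disagreement event $f$ is forced to misrank $x$ against the threshold $\theta^{*}$ and so already incurs $D(x)\ge\mathrm{KL}(\eta(x)\,\|\,\delta^{*})$ of logistic regret there.
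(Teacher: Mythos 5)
Your proposal is correct and follows essentially the same route as the paper: linearize $\ams^2$ at $(s^*,b^*)$ via joint convexity to reduce the AMS regret to a cost-weighted $0$--$1$ regret at cost $c=\delta^*$, pass to conditional (per-$x$) regrets, use that $\delta^*$ lies between $\eta(x)$ and $\sigma(f(x))$ on the disagreement event together with Pinsker's inequality, and finish with Jensen; your threshold $\theta^*=\log\frac{\delta^*}{1-\delta^*}$ with $\delta^*=1-\log(1+r)/r$ is exactly the paper's. Regarding the constant you flag as the remaining obstacle: your value $\Psi^*_s-\Psi^*_b=2r$ is the correct one, and the paper's evaluation $C=\log(1+r)-(\log(1+r)-r)=r$ differentiates $(s+b)\log(1+s/b)-s$ while dropping the leading factor $2$ in the definition of $\ams^2$; so the honest constant in the lemma is $2r$ (equivalently, the right-hand side should read $\frac{s(h^*_{\ams})}{b(h^*_{\ams})}\sqrt{2R_{\log}(f)}$), and you do not need a sharper pointwise estimate than $|\eta-\delta^*|\le|\eta-\sigma(f)|$.
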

The proof is quite long and hence is postponed to Section \ref{sec:proof}.
Interestingly, the proof goes by an intermediate bound of the AMS regret by
a cost-sensitive classification regret, with misclassification costs proportional
to the gradient coordinates of the AMS.

Lemma \ref{lemma:main} has the following interpretation. 
If we are able to find a function $f$ with small logistic regret, 
we are guaranteed that there exists a threshold $\theta^*$ such that $h_{f,\theta^*}$ has small AMS regret. 
Note that the same threshold $\theta^*$ will work for any $f$, and the right hand side of the bound is \emph{independent} of $\theta^*$.
We are now ready to prove the main result of the paper:
\begin{theorem}
Given a real-valued function $f$, let $\hat{\theta} = \argmax_{\theta} \ams(h_{f,\theta})$. Then:
\[
R_{\ams}(h_{f,\hat{\theta}}) \leq \frac{s(h^*_{\ams})}{b(h^*_{\ams})} \sqrt{\frac{1}{2} R_{\log}(f)}.
\]
\label{thm:main} 
\end{theorem}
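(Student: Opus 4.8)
The plan is to derive the theorem as an immediate corollary of Lemma~\ref{lemma:main}. The lemma already does all the heavy lifting: it produces a single, $f$-independent threshold $\theta^*$ for which $R_{\ams}(h_{f,\theta^*})$ is controlled by $\sqrt{R_{\log}(f)/2}$ up to the constant $s(h^*_{\ams})/b(h^*_{\ams})$. The only thing left is to argue that the data-tuned threshold $\hat{\theta} = \argmax_\theta \ams(h_{f,\theta})$ cannot do worse than this particular $\theta^*$.

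First I would note that maximizing $\ams(h_{f,\theta})$ over $\theta$ is the same as maximizing $\ams^2(h_{f,\theta})$ over $\theta$, since $\ams(s,b)\ge 0$ and $t\mapsto t^2$ is increasing on $\mathbb{R}_+$; hence $\hat{\theta}$ is also a maximizer of $\theta\mapsto\ams^2(h_{f,\theta})$. In particular, taking $\theta^*$ from Lemma~\ref{lemma:main} as one admissible competitor, we get $\ams^2(h_{f,\hat\theta}) \ge \ams^2(h_{f,\theta^*})$.

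Subtracting both sides from $\ams^2(h^*_{\ams})$ reverses the inequality and gives
\[
R_{\ams}(h_{f,\hat\theta}) \;=\; \ams^2(h^*_{\ams}) - \ams^2(h_{f,\hat\theta})
\;\le\; \ams^2(h^*_{\ams}) - \ams^2(h_{f,\theta^*})
\;=\; R_{\ams}(h_{f,\theta^*}),
\]
and then Lemma~\ref{lemma:main} bounds the right-hand side by $\frac{s(h^*_{\ams})}{b(h^*_{\ams})}\sqrt{\tfrac12 R_{\log}(f)}$, which is exactly the claimed bound. This completes the argument.

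Since the deduction itself is one line, there is essentially no obstacle in this final step; all the difficulty is concentrated in Lemma~\ref{lemma:main}, whose proof is deferred. The only minor points worth a sentence of care are (i) that the $\argmax$ over $\theta$ is attained (otherwise one works with a supremum and an $\varepsilon$-optimal threshold, and lets $\varepsilon\to 0$, which changes nothing), and (ii) making explicit that $\theta^*$ from the lemma is a legitimate competitor in the optimization defining $\hat{\theta}$, so that the comparison $\ams^2(h_{f,\hat\theta})\ge \ams^2(h_{f,\theta^*})$ is valid.
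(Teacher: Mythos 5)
Your proposal is correct and follows exactly the same route as the paper: treat $\theta^*$ from Lemma~\ref{lemma:main} as a competitor in the optimization defining $\hat{\theta}$, observe that maximizing $\ams$ is equivalent to minimizing $R_{\ams}$ over thresholds, and conclude $R_{\ams}(h_{f,\hat\theta}) \le R_{\ams}(h_{f,\theta^*})$ before invoking the lemma. Your added remarks about attainment of the $\argmax$ and monotonicity of squaring are fine but not essential.
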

\begin{proof}
The result follows immediately from Lemma \ref{lemma:main} by noticing that
solving $\max_{\theta}\ams(h_{f,\theta})$ is equivalent to 
solving $\min_\theta R_{\ams}(h_{f,\theta})$, and that 
$\min_\theta R_{\ams}(h_{f,\theta}) \leq R_{\ams}(h_{f,\theta^*})$.
\end{proof}
Theorem \ref{thm:main} motivates the following procedure for AMS maximization:
\begin{enumerate}
\item Find $f$ with small logistic regret, e.g. by 
employing a learning algorithm minimizing logistic loss on the training sample.
\item Given $f$, solve $\hat{\theta} = \argmax_{\theta} \ams(h_{f,\theta})$.
\end{enumerate}
Theorem \ref{thm:main} states that the AMS regret of the classifier obtained by
this procedure
is upperbounded by the logistic regret of the underlying real-valued function. 

We now discuss how to approach step 2 of the procedure in practice.
In principle, this step requires maximizing AMS defined
by means of an unknown distribution $\Pr(x,y)$. However, it is sufficient to 
optimize $\theta$ on the empirical counterpart of AMS calculated on a separate validation sample.
Due to space limit, we only give a sketch of the proof of this fact: 
Step 2 involves optimization within a class of threshold functions
(since $f$ is fixed), which has VC-dimension equal to $2$ \citep{DevroyeGyorfiLugosi96}.
By convexity of $\ams^2$,
\begin{equation}
\label{eq:deviation_of_empirical_AMS_from_AMS}
\ams^2(s,b) - \ams^2(\hat{s},\hat{b}) \leq 
 \left(\frac{\partial \ams^2(s,b)}{\partial s}, \frac{\partial\ams^2(s,b)}{\partial b} \right)^\top 
 (s - \hat{s}, b - \hat{b})
\end{equation}
(see, e.g. \cite{BoydVandenberghe2004}), where $\hat{s}$ and $\hat{b}$ are empirical counterparts of $s$ and $b$. By VC theory,
the deviations of $\hat{s}$ from $s$, and $\hat{b}$ from $b$ can be upperbounded
 with high probability \emph{uniformly} over the class of all threshold functions by $O(1/\sqrt{m})$,
 where $m$ is the validation sample size. This and (\ref{eq:deviation_of_empirical_AMS_from_AMS})
 implies, 
 that $\ams^2(s,b)$ of the empirical maximizer
  is $O(1/\sqrt{m})$ close to the $\max_\theta \ams^2(h_{f,\theta})$. Hence, step 2 can be performed
 within $O(1/\sqrt{m})$ accuracy on a validation sample independent from the training sample.


\section{Proof of Lemma \ref{lemma:main}}
\label{sec:proof}

The proof consists of two steps. First, we bound the AMS regret of any classifier $h$ by
its cost-sensitive classification regret (introduced below). 
Next, we show that there exists a threshold $\theta^*$, such that for any $f$, 
the cost-sensitive classification regret of $h_{f,\theta^*}$ 
is upperbounded by the logistic regret of $f$.

\paragraph{Bounding AMS regret by cost-sensitive classification regret.}
Given a real number $c \in (0,1)$,
define a \emph{cost-sensitive classification loss}
$\ell_c \colon \{-1,1\} \times \{-1,1\} \to \mathbb{R}_+$ as:
\[
\ell_c(y,h) = c \bm[y=-1] \bm[h=1] + (1-c) \bm[y=1] \bm[h=-1],
\]
where $\bm[A]$ is the indicator function equal to $1$ if predicate $A$ is true, and $0$ otherwise.
The cost-sensitive loss assigns different costs of misclassification for positive and
negative labels. 
Given classifier $h$, the expected cost-sensitive loss of $h$ is:
\[
L_c(h) = \mathbb{E}_{(x,y)}[\ell_c(y,h(x))]
= c b(h) + (1-c) (\Pr(y=1) - s(h)),
\]
where $s(h)$ and $b(h)$ are true positive and false positive
rates defined before. Let $h^*_c = \argmin_h L_c(h)$ be the minimizer of
the expected cost-sensitive loss among all classifiers.
Define the cost-sensitive classification regret as:
\[
R_c(h) = L_c(h) - L_c(h^*_c).
\]
Any convex and differentiable function $g(x)$ satisfies
$g(x) \geq g(y) + \nabla g(y)^\top(x-y)$ for any $x,y$ in its convex domain \citep{BoydVandenberghe2004}.
Applying this inequality to $\ams^2(s,b)$ jointly convex in $(s,b)$, 
we have for any $s,b,s^*,b^* \in [0,1]$:
\begin{equation}
\label{eq:ams_cost-sensitive_bound}
 \ams^2(s,b) \geq \ams^2(s^*,b^*) +
 \left(\frac{\partial \ams^2(s^*,b^*)}{\partial s^*}, \frac{\partial\ams^2(s^*,b^*)}{\partial b^*} \right)^\top 
 (s - s^*, b - b^*).
\end{equation}
Given classifier $h$, we set $s = s(h), b = b(h), s^*=s(h^*_{\ams}), b^* = b(h^*_{\ams})$,
and:
\[
 C:= \frac{\partial \ams^2(s^*,b^*)}{\partial s^*} -\frac{\partial \ams^2(s^*,b^*)}{\partial b^*},
 \qquad
 c:= -\frac{1}{C} \frac{\partial \ams^2(s^*,b^*)}{\partial b^*}.
\]
Since $\ams^2(s,b)$ is increasing in $s$ and decreasing in $b$, both $\frac{\partial \ams^2(s^*,b^*)}{\partial s^*}$ and $-\frac{\partial \ams^2(s^*,b^*)}{\partial b^*}$
are positive, which implies $C > 0$ and $0 < c < 1$.
In this notation, (\ref{eq:ams_cost-sensitive_bound}) boils down to:
\begin{align*}
R_{\ams}(h) = \ams^2(h^*_{\ams}) - \ams^2(h) &\leq C \Big( c (b(h) - b(h^*_{\ams})) + (1-c) (s(h^*_{\ams})-s(h)) \Big)\\
&= C \big( L_c(h) - L_c(h^*_{\ams}) \big)\\
&\leq C \big( L_c(h) - L_c(h^*_c) \big) = C R_c(h),
\end{align*}
where the last inequality follows from the definition of $h^*_c$. Thus, the AMS regret
is upperbounded by the cost-sensitive classification regret with costs proportional to the gradient coordinates
of $\ams^2(s^*,b^*)$ at optimum $h^*_{\ams}$.\footnote{Note that the gradient at optimum
does not vanish, as the optimum is with respect to $h$, not $(s,b)$.}

\paragraph{Bounding cost-sensitive classification regret by logistic regret.}
We first give a bound on
cost-sensitive classification regret by means of logistic regret \emph{conditioned} at a given $x$.
This part relies on the techniques used by \cite{Bartlett_etal06}.
Then, the final bound is obtained by taking expectation with respect to $x$, and applying
Jensen's inequality.

Given a label $h \in \{-1,1\}$, and $\eta \in [0,1]$, define \emph{conditional} cost-sensitive classification
loss as:
\[
\ell_c(\eta,h) = c(1-\eta) \bm[h=1] + (1-c) \eta \bm[h=-1].
\]
The reason this quantity is called ``conditional loss'' becomes clear if we note that for any classifier
$h$, $L_c(h) = \mathbb{E}_x [\ell_c(\eta(x),h(x))]$, where $\eta(x) = \Pr(y=1|x)$. In other words,
$\ell_c(\eta(x),h(x))$ is the loss of $h$ conditioned on $x$.

Given $\eta$, let $h^*_c = \argmin_{h \in \{-1,1\}} \ell_c(\eta,h)$. 
 It can be easily verified
that:
\[
 h^*_c = \sgn\left(\eta - c \right),
\]
and $\ell_c(\eta,h^*_c) = \min\{c(1-\eta), (1-c) \eta \}$.
The conditional regret of $h$ is defined as $r_c(\eta,h) = \ell_c(\eta,h) - \ell_c(h^*_c)$.
Note that:
\[
r_c(\eta,h) = 
\left\{
\begin{array}{ll}
0 & \quad \text{if~~} h = h^*_c,\\
\left|\eta - c \right|
& \quad \text{if~~} h \neq h^*_c.
\end{array}
\right.
\]
Given a real number $f$, and $\eta \in [0,1]$, define \emph{conditional} logistic loss as:
\[
\ell_{\log}(\eta,f) = (1-\eta) \log\left(1 + e^f\right) + \eta \log\left(1 + e^{-f}\right).
\]
Let $f^*_{\log} = \argmin_{f \in \mathbb{R}} \ell_{\log}(\eta,f)$. By differentiating
$\ell_{\log}(\eta,f)$ with respect to $f$, and setting the derivative to $0$, we get that:
\[
 f^*_{\log} = \log \frac{\eta}{1-\eta},
\]
and $\ell_{\log}(\eta,f^*_{\log}) = -\eta \log \eta - (1-\eta) \log (1-\eta)$, the binary entropy of $\eta$.
The conditional logistic regret of $f$ is given by $r_{\log}(\eta,f) = \ell_{\log}(\eta,f) - \ell_{\log}(f^*_{\log})$.
The conditional regret has a particularly simple form 
when $f$ is re-expressed as a probability estimate $\eta_f$:
\[
r_{\log}(\eta,f) = D(\eta \| \eta_f),
\qquad \text{where}
\quad \eta_f := \frac{1}{1+e^{-f}},
\]
and $D(\eta \| \eta_f) = \eta \log \frac{\eta}{\eta_f} + (1-\eta) \log \frac{1-\eta}{1-\eta_f}$
is the Kullback-Leibler divergence. By Pinsker's inequality,
\[
D(\eta \| \eta_f) \geq 2 (\eta - \eta_f)^2. 
\]
Given real number $f$, define $h_{f,\theta^*} = \sgn(f-\theta^*)$,
where:
\[
 \theta^* = \log \frac{c}{1-c}.
\]
We will now
bound the conditional cost-sensitive classification regret
$r_c(\eta,h_{f,\theta^*})$ in terms of conditional logistic regret $r_{\log}(\eta,f)$.
First note that:
\[
h_{f,\theta^*} = 1 ~\iff~ f \geq \theta^* = \log \frac{c}{1-c}
~\iff~ \frac{1}{1+e^{-f}} \geq c
~\iff~ \eta_f \geq c,
\]
so that we can equivalently write $h_{f,\theta^*} = \sgn(\eta_f - c)$.
Since $h^*_c = \sgn(\eta - c)$,
then whenever
$(\eta_f - c)(\eta - c) > 0$, 
it holds $h_{f,\theta^*}= h^*_c$, and $r_c(\eta,h_{f,\theta^*}) = 0$.
On the other hand, when $(\eta_f - c)(\eta - c) \leq 0$,
it holds\footnote{
$r_c(\eta,h_{f,\theta^*}) = |\eta - c|$
if $(\eta_f - c)(\eta - c) < 0$, and can be either $0$ or
$|\eta - c|$ when $(\eta_f - c)(\eta - c) = 0$.
} 
$r_c(\eta,h_{f,\theta^*}) \leq |\eta - c|$, whereas:
\begin{align*}
r_{\log}(\eta,f) &= D(\eta \| \eta_f) \stackrel{\mathrm{Pinsker's}}{\geq} 2(\eta - \eta_f)^2 = 2(\eta - c + c - \eta_f)^2 \\
&= 2(\eta-c)^2 + 4(\eta - c)(c - \eta_{f})
+ 2(c - \eta_f)^2 \\
&\geq 2 (\eta - c)^2 
\geq 2 r^2_c(\eta,h_{f,\theta^*}),
\end{align*}
where the last but one inequality is implied by $(\eta_f - c)(\eta - c) \leq 0$. Taking both cases together, we get:
\[
r_c(\eta,h_{f,\theta^*}) \leq \sqrt{r_{\log}(\eta,f)/2}.
\]
Now, given any function $f$,
\begin{align*}
R_c(h_{f,\theta^*}) &= \mathbb{E}_x[r_c(\eta,h_{f,\theta^*})]
\leq \mathbb{E}_x\left[\sqrt{ r_{\log}(\eta,f)/2}\right] 
\leq \sqrt{\mathbb{E}_x[r_{\log}(\eta,f)]/2}
= \sqrt{R_{\log}(f)/2},
\end{align*}
where the last inequality is from Jensen's inequality applied to the concave
function $x \mapsto \sqrt{x}$.

\paragraph{Finishing the proof.}
Combining the results from both parts, we get:
\[
R_{\ams}(h_{f,\theta^*}) \leq C R_c(h_{f,\theta^*})
\leq C \sqrt{R_{\log}(f)/2},
\]
where $\theta^* = \log\frac{c}{1-c}$ is independent of $f$.
Recalling that
$C = \frac{\partial \ams^2(s^*,b^*)}{\partial s^*} -\frac{\partial \ams^2(s^*,b^*)}{\partial b^*}$, we calculate:
\[
C = \log\left(1 + \frac{s^*}{b^*}\right) - \left(\log\left( 1 + \frac{s^*}{b^*} \right) - \frac{s^*}{b^*} \right) = \frac{s^*}{b^*},
\]
where $s^* = s(h^*_{\ams})$ and $b^* = b(h^*_{\ams})$.
This finished the proof. 
$\square$

Note that the proof actually specifies the exact value of the universal threshold $\theta^*$:
\[
\theta^* = \log \frac{c}{1-c}, \qquad \text{where~} c = 1 - \frac{b^*}{s^*} \log \left(1+\frac{s^*}{b^*}\right).
\]

\section{Generalization beyond AMS and logistic loss}
\label{sec:generalization}
Results of this paper can be generalized beyond AMS metric and logistic loss surrogate. 
The AMS can be replaced by any other evaluation metric, which enjoys the following two properties:
1) is increasing in $s$, and decreasing in $b$; 2) is jointly convex in $s$ and $b$.
These were the only two properties of the AMS used in the proof of Lemma \ref{lemma:main}.
The logistic loss surrogate can be replaced by any other convex surrogate loss $\ell$, 
such that the following property holds: There exists a threshold $\theta^*$ which is a function of the cost $c$, such that for all $f$,
\[
R_c(h_{f,\theta^*}) \leq \lambda \sqrt{R_{\ell}(f)},
\]
for some positive constant $\lambda$. This property is satisfied by, e.g., squared error loss $\ell_{\mathrm{sq}}(y,f) = (y-f)^2$ with $\lambda = 1$,
which can be verified by noticing that the logistic regret upperbounds the squared error regret by Pinsker's inequality.
We conjecture that all \emph{strongly proper composite losses} \citep{Agarwal2014} hold this property.

\acks{The author was supported by the Foundation For Polish Science Homing Plus grant, co-financed by the European Regional Development Fund.
The author would like to thank Krzysztof Dembczy{\'n}ski for interesting discussions and proofreading the paper.}

\bibliography{higgs}

\end{document}